\title{Quantized Compressive K-Means}
\author{Vincent Schellekens\thanks{E-mail: {\em \{vincent.schellekens, ~laurent.jacques\}@uclouvain.be}. ISPGroup, ELEN/ICTEAM, UCLouvain (UCL), B1348 Louvain-la-Neuve, Belgium. VS and LJ are funded by Belgian National Science Foundation
(F.R.S.-FNRS).} \and Laurent Jacques\footnotemark[1]}
\newcommand{\Quant}{q}
\newcommand{\Rbb}{\mathbb{R}}
\newcommand{\scp}[2]{\langle #1, #2 \rangle}
\newtheorem{proposition}{Proposition}
\newcommand{\sign}{{\rm sign}\,}
\renewcommand{\leq}{\leqslant}
\renewcommand{\geq}{\geqslant}
\DeclareMathOperator*{\argmin}{arg\,min}
\newcommand{\bb}{\mathbb}
\newcommand{\ts}{\textstyle}
\newcommand{\bs}{\boldsymbol}
\newcommand{\cl}{\mathcal}
\newcommand{\ie}{\emph{i.e.}, }
\newcommand{\eg}{\emph{e.g.}, }
\renewcommand{\Vec}[1]{\bm{#1}} 
\newcommand{\distiid}{\overset{\mathrm{iid}}{\sim}} 
\newcommand{\expec}[1]{\mathop{{}\mathbb{E}}_{#1}} 
\newcommand{\im}{\mathrm{i}\mkern1mu} 
\DeclarePairedDelimiterX{\norm}[1]{\lVert}{\rVert}{#1} 
\newcommand{\Integr}[4]{\int_{#1}^{#2}#3\mathrm{d}#4} 
\DeclarePairedDelimiter\floor{\lfloor}{\rfloor}
\newcommand{\RemoveAlgoNumber}{\renewcommand{\fnum@algocf}{\AlCapSty{\AlCapFnt\algorithmcfname}}}
\newcommand{\RevertAlgoNumber}{\algocf@resetfnum}
\begin{document}
\addtolength{\textfloatsep}{-15pt}
\RemoveAlgoNumber
\maketitle

\begin{abstract}
The recent framework of \textit{compressive statistical learning} proposes to design tractable learning algorithms that use only a heavily compressed representation---or \emph{sketch}---of massive datasets. Compressive K-Means (CKM) is such a method: it aims at estimating the centroids of data clusters from pooled, non-linear, random signatures of the learning examples. While this approach significantly reduces computational time on very large datasets, its digital implementation wastes acquisition resources because the learning examples are compressed only \textit{after} the sensing stage.

The present work generalizes the CKM sketching procedure to a large class of periodic nonlinearities including hardware-friendly implementations that \textit{compressively acquire entire datasets}. This idea is exemplified in a Quantized Compressive K-Means procedure, a variant of CKM that leverages 1-bit universal quantization (\ie retaining the least significant bit of a standard uniform quantizer) as the periodic sketch nonlinearity. Trading for this resource-efficient signature (standard in most acquisition schemes) has almost no impact on the clustering performance, as illustrated by numerical experiments.
\end{abstract}

\section{Introduction}
\label{sec:introduction}

Numerous scientific fields have recently experienced a paradigm shift towards data-driven approaches where mathematical models are inferred from a dataset of learning examples ${X  = \{ \Vec{x}_i  \in \Rbb^n\}_{i = 1}^N}$. 
\textit{K-means clustering} (KMC) \cite{Steinhaus1956} is such a method widely used in, \eg data compression, pattern recognition, and bioinformatics~\cite{Jain2010clustering50years,Steinley2006kmeanssynthesis}. Given $K$, a prescribed number of clusters (groups of similar data), KMC seeks the centroids (or ``cluster representatives") $\cl C  =  \{\Vec{c}_k  \in  \Rbb^n\}_{k = 1}^K$ minimizing the Sum of Squared Errors (SSE):
\begin{equation}
\cl C^* = \argmin_{\cl C} \mathrm{SSE}(\cl C) = \argmin_{\cl C} \: \sum_{i=1}^{N}\,\min_{1 \leq k \leq K} \: \|\Vec{x}_i - \Vec{c}_k\|^2. 
\label{eq:clustering}
\end{equation}
Solving~\eqref{eq:clustering} exactly is NP-hard \cite{Aloise2009clusterNPhard}, so in practice a tractable heuristic such as the popular \texttt{k-means} algorithm \cite{Lloyd1982kmeans, Arthur2007kmeans++} is widely used to find an approximate solution $\cl C_{\texttt{km}}$. However, \texttt{k-means} complexity scales poorly with the size of modern voluminous datasets where $N$ is typically $\cl O(10^3 - 10^6)$, or grows continually for \textit{data streams} processing. In fact, since \texttt{k-means} repeatedly requires---at each iteration---a thorough pass over $X$, this massive dataset must be stored and read several times, with prohibitive memory and time consumptions. Paradoxically, the large dataset size (\ie $nN$) dwarfs, and does not affect, the number of parameters learned by \texttt{k-means} (\ie $nK$). \emph{Ideally, larger datasets increase the model accuracy without requiring more training computational resources}.

\begin{figure}
	\centering
	\includegraphics[width=0.72\linewidth]{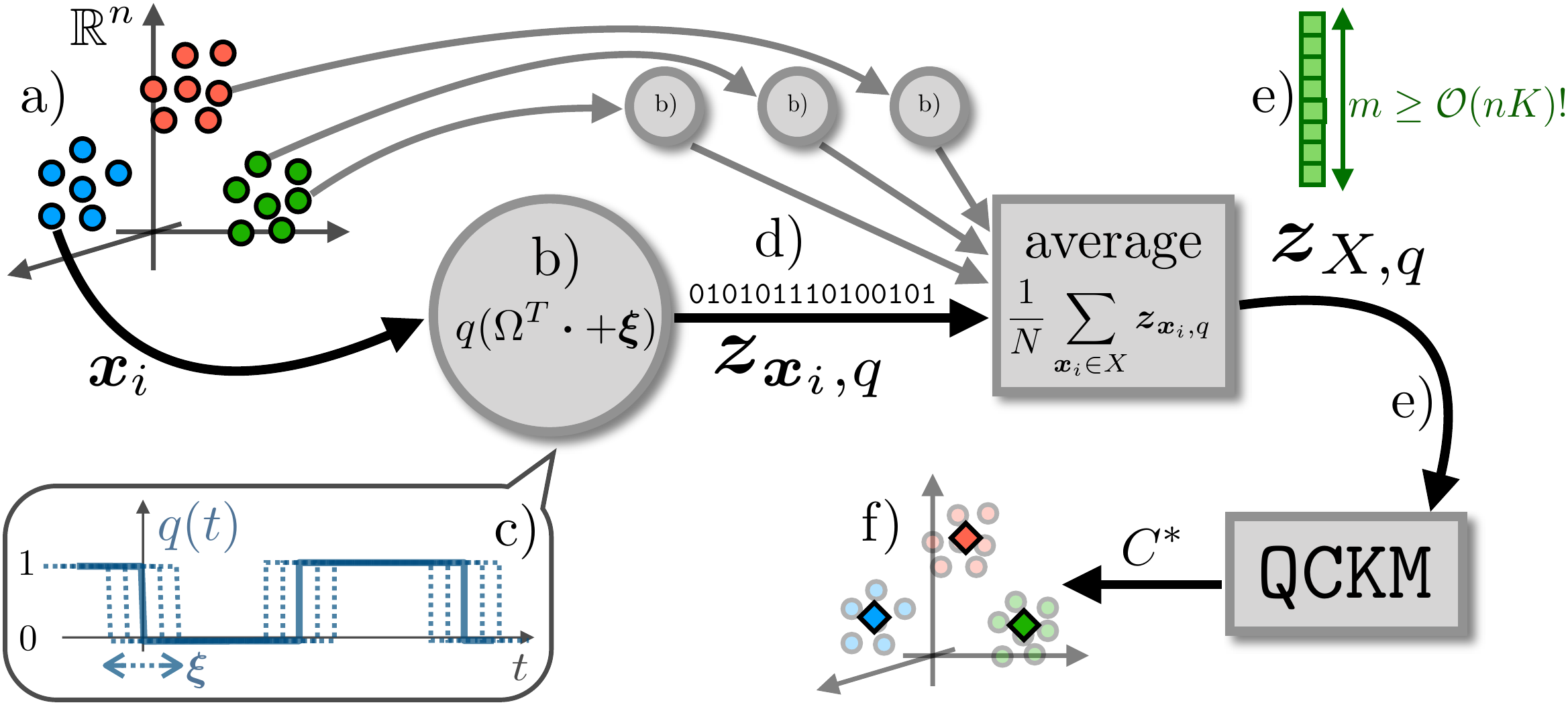}
	\caption{ \textbf{a)} A massive dataset $X$, composed of $K$ clusters of examples $\Vec{x}_i \in \Rbb^n$, is not explicitly available but acquired via one (or a cloud of) low-power sensor(s) \textbf{b)} implementing random projections on frequencies $\Omega$, a dithering $\Vec{\xi}$ and \textbf{c)} the 1-bit universal quantization $\Quant(\cdot)$ ($-1$ is encoded as $0$). Only \textbf{d)} the sketch contributions of the examples ($m$ bits) are acquired, and form after averaging \textbf{e)} the sketch, a highly compressed but meaningful representation of $X$. Our \texttt{QCKM} method then extracts \textbf{f)} the $K$ cluster centroids from it.}
	\label{fig:cloud}
\end{figure}

This goal motivates the recent \textit{compressive learning} framework \cite{gribonval2017compressiveStatisticalLearning}, where learning algorithms solely require access to a drastically compressed representation of the dataset called the \textit{sketch}, a single vector $\Vec{z}_X \in \mathbb{C}^m$, constructed by collecting $m$ (empirical) generalized moments of the dataset $X$:
\begin{equation}
\Vec{z}_{X} := \frac{1}{N} \sum_{i = 1}^N \Vec{z}_{\Vec{x}_i} \quad \text{where} \quad \Vec{z}_{\Vec{x}_i} := \left[\exp (- \im \Vec{\omega}_j^T\Vec{x}_i )\right]_{j=1}^{m},
\label{eq:basicSketch}
\end{equation}
with ``frequencies'' $\Vec{\omega}_j \in \Rbb^n$ sampled randomly according to a distribution $\Lambda$. The sketch is thus the \emph{pooling} (average) of random projections of the data samples after passing through a nonlinear, periodic signature---the complex exponential. The Compressive K-Means (\texttt{CKM}) method \cite{keriven2016compressive} clusters $X$ from $\Vec{z}_X$ by replacing~\eqref{eq:clustering} with a \textit{sketch matching} optimization problem:
\begin{equation}
(\cl C_{\texttt{CKM}},\Vec{\alpha}_{\texttt{CKM}}) = \argmin_{(\cl C,\Vec{\alpha})} \textstyle \|\Vec{z}_{X} - \sum_{k = 1}^K \alpha_k \Vec{z}_{\Vec{c}_k}\|^2,
\label{eq:CKMclustering}
\end{equation}
with the cluster weights $\alpha_k \geq 0$ satisfying $\sum_k \alpha_k = 1$. It was shown empirically that $\mathrm{SSE}(\cl C_{\texttt{CKM}}) \simeq \mathrm{SSE}(\cl C_{\texttt{km}})$ provided $m = \mathcal{O}(nK)$, \ie the required sketch size $m$ is only proportional to the number of parameters to learn, allowing for tractable memory consumption and training time \emph{whatever the number of training examples} $N$. It was also theoretically proven that $m = \cl O(nK^2)$ is a sufficient condition for retrieving meaningful centroids from the sketch \cite{gribonval2017compressiveStatisticalLearning}. Interestingly, the sketch is (up to a re-scaling) linear\footnote{A sketch $\Vec{\Phi}_S$ of a vector set $S$ is said \textit{linear} if $\Vec{\Phi}_{S \cup S'} = \Vec{\Phi}_{S} + \Vec{\Phi}_{S'}$.} and $\Vec{z}_X$ can thus be computed in one pass over the data, possibly realized in parallel over several machines. This sketch is also easy to update when new examples are available (\eg in data streams). \\

The following limitation in \texttt{CKM}s sketching strategy motivate this work: \emph{all signals $\Vec{x}_i \in X$ (or, equivalently, their projections onto the frequencies $\{\bs\omega_j\}_{j=1}^m$, as in a compressive sensing scheme~\cite{Candes2006CStheorem}) must be acquired and stored at full resolution (high bitrate) in order to expansively evaluate (in software) their contributions $\Vec{z}_{\Vec{x}_i}$ to the sketch}. A resource-preserving (\eg computational or energy efficient) compressive learning sensor should directly and solely acquire $\Vec{z}_{\Vec{x}_i}$. While random projections can be cheap to compute (\eg by using fast structured random projections \cite{chatalic2018fastSketch,Candes2008CSintro} or, possibly, by relying on optical random processes~\cite{saade2016lighton}) the evaluation of the complex exponential in~\eqref{eq:basicSketch} is complicated to implement in hardware---the costliest step in fast sketch computation \cite{chatalic2018fastSketch}.

Inspired by recent works concerning 1-bit random embeddings \cite{Boufounos2013efficientCodingQuantized}, we propose a new sketch procedure, illustrated in Fig.~\ref{fig:cloud}. This one is conceptually much simpler to integrate directly in hardware (\eg using voltage controlled oscillators~\cite{yoon2008time}), bypassing the high-bitrate signal acquisition. We replace the costly $\exp(- \im \hspace{1px} \cdot)$ signature function by \emph{1-bit universal quantization} $\Quant(\cdot) = \sign(\cos( \cdot)) = 2(\floor{\frac{\cdot}{2 \pi}} \mod 2) - 1$. This function (represented in Fig.~\ref{fig:cloud}c), corresponds to taking the least significant bit (LSB) of a uniform quantizer with quantizer stepsize~$\pi$. We justify this modification by proving that the periodicity of the signature function is much more important than its particular shape. While our sketch is cheaper to compute, it retains the advantages of the original one, \ie it is linear (prone to distributed computing) and its required size $m$ scales---as we show in our experiments---still as $\cl O(nK)$, with only a $15$ to $25\%$ increase compared to \texttt{CKM}.

\paragraph{Outline:}
Sec.~\ref{sec:compressive-k-means} recapitulates how \texttt{CKM} performs clustering from the sketch $\Vec{z}_X$. We propose a generalized sketch $\Vec{z}_{X,f}$ in Sec.~\ref{sec:generalizedSketch} where the signature $\exp(-\im \cdot)$ is replaced by a generic periodic function $f(\cdot)$. Our main result states that, in the \texttt{CKM} clustering method, $\Vec{z}_{X,f}$ can be used instead of $\Vec{z}_{X}$, even though $f$ is potentially non-differentiable, thanks to the addition of a random \textit{dithering} on the argument of $f$. This claim is supported by the possibility to recover the cost function implicitly minimized in \texttt{CKM} from $\Vec{z}_{X,f}$ (Prop.~\ref{prop:main}). Based on this observation, in Sec.~\ref{sec:quantized-ckm} we define our Quantized Compressive K-Means (\texttt{QCKM}) method that solves KMC from $\Vec{z}_{X,\Quant}$, a 1-bit sketch of the dataset associated with $f=\Quant$. We validate experimentally that \texttt{QCKM} competes favorably with \texttt{CKM} in Sec.~\ref{sec:experiments}, before concluding in Sec.~\ref{sec:conclusion}.

\paragraph{Related work:}
Most fast clustering methods for massive datasets rely on sample-wise dimensionality reduction \cite{Boutsidis2009UnsupervisedFeatureSelectionKmeans, Boutsidis2010randomProjectionsCluster,Needell2017classificationBinary,dupraz2018binarykmeans}. One notable exception is the coresets method \cite{Frahling08} that proposes to subsample the dataset to both approximate the SSE and boost K-means. For the related kernel K-means problem, \cite{Chitta2012clusterRFF} uses Random Fourier Features \cite{Rahimi2008RFF}, \ie the low-dimensional mapping $\Vec{z}_{(\cdot)}$ defined in~\eqref{eq:basicSketch}. For $\bs u,\bs v \in \bb R^n$, the inner product $\scp{\Vec{z}_{\Vec{u}}}{\Vec{z}_{\Vec{v}}}$ approximates a shift-invariant kernel $\kappa(\bs u, \bs v)$ associated with the frequency distribution $\Lambda$. \texttt{CKM} \cite{keriven2016compressive} actually \textit{averages} individual RFF of data points. 
Interestingly, $\kappa$ also defines a Reproducing Kernel Hilbert Space in which two probability density functions (pdf’s) can be compared with a Maximum Mean Discrepancy (MMD) metric~\cite{gretton2007mmd,smola2007hilbert,Sriperumbudur2011densityEstimationRKHS,Aronszajn1950rkhsThm,keriven2016GMMestimation}. Equipped with the MMD metric, the Generalized Method of Moments~\cite{Hall2005generalizedMethodMoments} in~\eqref{eq:CKMclustering} is equivalent to an infinite-dimensional Compressed Sensing \cite{Candes2008CSintro} problem, where the ``sparse'' pdf underlying the data (\eg approximated by few Diracs) is reconstructed from a small number of compressive, random linear pdf measurements: the sketch~\cite{keriven2016GMMestimation}. The method to solve~\eqref{eq:CKMclustering} is thus inspired by the \texttt{OMP(R)} CS recovery algorithm, \ie Orthogonal Matching Pursuit (with Replacement) \cite{Pati1993OMP,Jain2011OMPR}. In this work, analogously to how RFF are generalized to any periodic signature of random projections in \cite{Boufounos2013efficientCodingQuantized,Boufounos2015signalGeometry}, \textit{pooled} RFF (the dataset sketch) are generalized to \textit{pooled} periodic signatures of random projections, with universal quantization (known to preserve local signal distances) as a particular case. 

\section{Background: Compressive K-Means (CKM)}
\label{sec:compressive-k-means}

Most unsupervised learning tasks amount to estimating (some parameters of) the unknown probability distribution $\cl P$ from $N$ learning examples $\Vec{x}_i \distiid \cl P$. Compressive learning aims at estimating $\cl P$ from its sketch $\cl A (\cl P) \in \bb C^m$: a random sampling of its characteristic function $\phi_{\cl P}(\Vec{\omega}) := \expec{\Vec{x} \sim \cl P} e^{\im \Vec{\omega}^T\Vec{x}}$ at $m$ frequencies $\Omega = (\Vec{\omega}_1,\cdots,\Vec{\omega}_m)$ drawn from a well-specified distribution $\Vec{\omega}_j \distiid \Lambda$.
The \textit{sketch operator} reads\footnote{Scalar functions (\eg $\exp$) are here applied component-wise on vectors.}
\begin{equation}
\label{eq:usualSketch}
 \cl A(\mathcal P) := \expec{\Vec{x} \sim \mathcal{P}} e^{- \im \Omega^T \Vec{x} } = (\phi^*_{\cl P}(\bs \omega_j))_{j=1}^m
 \quad \simeq \quad \Vec{z}_X := \cl A ( \hat{\cl P}_X ) =  \frac{1}{N} \sum_{\Vec{x}_i \in X} e^{- \im \Omega^T \Vec{x}_i }, 
\end{equation}
\noindent where the sketch $\Vec{z}_X$ of a dataset $X$ actually refers to the sketch of its empirical pdf, $\hat{\cl P}_X := \frac{1}{N} \sum_{\Vec{x}_i \in X} \delta_{\Vec{x}_i}$, as announced in~\eqref{eq:basicSketch}. Given its sketch $\cl A (\cl P)$, $\cl P$ can be approximated by a pdf $\mathcal{Q}$ belonging to some simple (``sparse") model set $\cl G$---where the approximation error is quantified by the MMD metric, that can in this particular context be written as $\gamma^2_{\Lambda}(\cl P, \cl Q) :=  \expec{\Vec{\omega} \sim \Lambda} | \phi_{\cl P}(\Vec{\omega}) - \phi_{\cl Q}(\Vec{\omega}) |^2$ \cite{Sriperumbudur2011densityEstimationRKHS,Sriperumbudur2010hilbertEmbedding}. The $\ell_2$ sketch distance serves as an estimate for $\gamma_{\Lambda}(\cl P, \cl Q)$, hence in practice $\cl Q$ is found by solving the \textit{sketch matching problem}:
\begin{equation}
\label{eq:sketchmatching}
\mathcal{Q}^* \in \argmin_{\mathcal{Q} \in \mathcal{G}} \: \| \cl A ( \mathcal{P} ) - \cl A ( \mathcal{Q} ) \|^2 \:\: \simeq \:\: \argmin_{\mathcal{Q} \in \mathcal{G}} \: \gamma_{\Lambda}^2(\cl P, \cl Q). 
\end{equation}
In \texttt{CKM}, $\cl A(\cl P)$ is approximated by $\cl A(\hat{\cl P}_X)$, and $\cl Q$ is a weighted mixture of $K$ Diracs located at the the centroids $\Vec{c}_k \in \cl C \subset \Rbb^n$, \ie $\cl G := \{ \sum_{k = 1}^{K} \alpha_k \delta_{\Vec{c}_k} : \Vec{c}_k\in \cl C,\, \alpha_k \geq 0,\, \sum \alpha_k = 1\}$. From~\eqref{eq:sketchmatching} the \texttt{CKM} objective function reads: 
\begin{equation}
\label{eq:ckm}
(\cl C_{\texttt{CKM}},\Vec{\alpha}_{\texttt{CKM}}) \in \arg  \min_{(\cl C, \bs \alpha)} \: \norm{ \Vec{z}_X - \cl A ( \ts \sum_{k=1}^{K} \alpha_k \delta_{\Vec{c}_k} )} ^2, 
\end{equation}
as announced in~\eqref{eq:CKMclustering}. This non-convex problem is hard to solve exactly, but the \texttt{CKM} algorithm (based on \texttt{OMPR}) \cite{keriven2016compressive}, detailed in pseudocode below, seeks an approximate solution. More precisely, \texttt{CKM} greedily selects new centroids minimizing a residual $\Vec{r} \in \bb C^m$ (Steps 1 and 2) inside a box with lower and upper bounds $\Vec{l}, \Vec{u} \in \bb R^n$, respectively, enclosing the data $X$, and eventually replacing bad centroids in Step 3. 
The centroid weights $\alpha_k$ are then computed and a global gradient descent initialized at the current values allows further decrease of the objective (Steps 4 and 5). \texttt{CKM} relies on solving several (not always convex) optimization sub-problems, in practice solved approximately (a \textit{local} optimum is found) using a quasi-Newton optimization scheme. 

\begin{algorithm}
	\SetKwData{Left}{left}\SetKwData{This}{this}\SetKwData{Up}{up}
	\SetKwFunction{Union}{Union}\SetKwFunction{FindCompress}{FindCompress}
	\SetKwInOut{Input}{input}\SetKwInOut{Output}{output}
	$\Vec{r} \leftarrow \Vec{z}_X$, $\cl C \leftarrow \emptyset$ \emph{(Initialize residual and  centroids)} \\
	\For{$t = 1,\, \cdots, 2K$} 
	{ 
		\textbf{Step 1} : gradient descent selects $\Vec{c}$ highly correlated with residual:
		
		$\Vec{c} = \texttt{maximize}_{\bar{\Vec{c}}} \: \Re \langle \frac{\mathcal{A}\delta_{\bar{\Vec{c}}}}{\|\mathcal{A}\delta_{\bar{\Vec{c}}}\|},\Vec{r} \rangle \quad \text{s.t.} \quad  \Vec{l} \leq \bar{\Vec{c}} \leq \Vec{u}$ \\
		
		\textbf{Step 2} : add it to the support:
		
		$\cl C \leftarrow \cl C \cup \{\Vec{c}\}$\\
		
		\textbf{Step 3} : Reduce support by Hard Thresholding:
		
		\If{$|\cl C| > K$}{
			$\Vec{\beta} = \arg \min_{\bar{\Vec{\beta}}} \: \norm[\big]{\Vec{z}_X - \sum_{k=1}^{|\cl C|} \bar{\beta}_k \frac{\mathcal{A}\delta_{\Vec{c}_k}}{\|\mathcal{A}\delta_{\Vec{c}_k}\|}}\quad \text{s.t.}\quad \bar{\Vec{\beta}} \in \mathbb{R}^{|\cl C|}_+$ \\
			$\cl C \leftarrow$ set of $K$ centroids $\Vec{c}_k$ corresponding to $K$ largest magnitude values of $\Vec{\beta}$.
		}
		
		\textbf{Step 4} : Project to find optimal weights:\vspace{1mm}
		
		$\Vec{\alpha} = \arg \min_{\bar{\Vec{\alpha}}} \: \norm[\big]{\Vec{z}_X - \sum_{k=1}^{|\cl C|} \bar{\alpha}_k \mathcal{A}\delta_{\Vec{c}_k}}\quad\text{s.t.}\quad \bar{\Vec{\alpha}} \in \mathbb{R}^{|\cl C|}_+$ \\
		
		\textbf{Step 5} : Global gradient descent\vspace{1mm}
		
		$(\cl C,\Vec{\alpha}) \leftarrow \texttt{minimize}_{\bar{\cl C}=\{\bar{\bs c}_k\}, \bar{\Vec{\alpha}}} \: \norm[\big]{\Vec{z}_X - \sum_{k} \bar{\alpha}_k \mathcal{A}\delta_{\bar{\Vec{c}}_k}} \: \ \text{s.t.}\  \Vec{l} \leq \bar{\Vec{c}}_k \leq \Vec{u}$
		
		$\Vec{r} = \Vec{z}_X - \sum_{k=1}^{|\cl C|} \alpha_k \mathcal{A}\delta_{\Vec{c}_k}$ \emph{(Update residual)}\\
	}
	\caption*{\texttt{CKM}: Compressive $K$-Means clustering.}\label{algo:ckm}
\end{algorithm}

\paragraph{CKM parameters:} The frequency distribution $\Lambda(\Vec{\omega})$ ought to define a meaningful metric $\gamma_{\Lambda}$, \ie the objective function of \texttt{CKM}. By Bochner's theorem \cite{Rudin1962bochnerBook}, $\Lambda$ is associated with a positive definite, translation-invariant kernel (``similarity measure") $\kappa(\Vec{x},\Vec{x}') = K(\Vec{x}-\Vec{x}')$ through the Fourier transform $K(\Vec{u}) = \cl F(\Lambda)(\Vec{u}) := \Integr{}{}{e^{- \im \Vec{u}^T \Vec{\omega}}}{\Lambda(\Vec{\omega})}$. Concretely, $\Lambda$ limits the frequencies of $\cl P$ we are able to observe in $\cl A(\cl P)$, acting as a ``low-pass filter" convolving $\cl P$ with $K$. $\Lambda$ thus implicitly controls \texttt{CKM}'s clustering scale, and requires some \textit{a priori} insight about~$\cl P$. In practice \texttt{CKM} uses heuristics adjusting $\Lambda$ from a subset of $X$ \cite{keriven2016GMMestimation}. For the required sketch dimension $m$, \cite{gribonval2017compressiveStatisticalLearning} provides theoretical guarantees when $m = \cl O(nK^2)$ but experiments strongly suggest $\cl O(nK)$ is sufficient~\cite{keriven2016compressive}. 

\section{Sketching with general signature functions}
\label{sec:generalizedSketch}

We here generalize the sensing function $\exp(-\im \cdot)$ of the sketch~\eqref{eq:usualSketch} to a general (\eg discontinuous) periodic function~$f(t)$ assumed (w.l.o.g.) $2\pi$-periodic, centered and taking values in $[-1,1]$. Therefore $f(t) = \sum_{k \neq 0} F_k e^{\im k t}$, with Fourier series coefficients $F_k$ such that $F_0=0$, and $F_{\pm 1} \neq 0$ (up to a rescaling of $f$). The \textit{generalized sketch operator} $\cl A_f$ is 
\begin{equation}
\ts \cl A_f (\cl P) := \expec{\Vec{x} \sim \cl P} f(\Omega^T\Vec{x} + \Vec{\xi}) \quad \text{for} \quad \Vec{\omega}_j \distiid \Lambda(\Vec{\omega}), 
\end{equation}
with a uniform \emph{dithering} $\xi_j \distiid \cl U([0,2 \pi])$. 

Our main question is now: \emph{given $\cl A_f (\cl P)$, is it still possible to approximate $\cl P$ by some low-complexity distribution $\cl Q$, as done in~\eqref{eq:sketchmatching}?} Our answer is positive since we can 
still approximate the same objective function, \ie the MMD metric $\gamma_{\Lambda}(\cl P, \cl Q)$, from this new sketch. Intuitively, the dithering $\Vec{\xi}$ allows us to ``separate" $\cl A_f (\cl P)$ into two terms: one associated with the low frequencies of $f$, that contributes to the target objective $\gamma_{\Lambda}(\cl P, \cl Q)$, and one ``high-frequency" term that is constant for the relevant optimization problem. This is formally proven in the following proposition. 
\begin{proposition}
	\label{prop:main}
	Given pdfs $\cl P, \cl Q$, denoting $f$'s first harmonic as $f_1(t) := \sum_{k \in \{\pm 1\}} F_k e^{\im t} $, there is a constant $c_{\cl P} > 0$ such that
	\begin{equation}
	\label{eq:accessAllFreq}
	\ts \big| (2 m | F_1 |^2)^{-1} \norm{ \mathcal A_f(\mathcal P) - \mathcal{A}_{f_1} (\mathcal{Q}) }^2  -  \gamma^2_{\Lambda}(\mathcal P, \mathcal Q) -  c_{\mathcal{P}} \big| \leq \epsilon, 
	\end{equation}
	with probability exceeding $1 - 2e^{- C_f m \epsilon^2}$on the draw of $\Omega$ and~$\Vec{\xi}$, for $C_f = 8|F_1|^4\,(1 + 2|F_1|)^{-4}$. 
\end{proposition}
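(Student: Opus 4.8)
The plan is to recognize $(2m|F_1|^2)^{-1}\|\cl A_f(\cl P)-\cl A_{f_1}(\cl Q)\|^2$ as an empirical average of $m$ i.i.d.\ bounded random variables whose mean is exactly $\gamma^2_\Lambda(\cl P,\cl Q)+c_{\cl P}$, and then to invoke Hoeffding's inequality. First I would expand each sketch coordinate through the Fourier series of $f$: with $\phi_{\cl P}(\Vec u)=\E_{\Vec x\sim\cl P}e^{\im\Vec u^T\Vec x}$, the $j$-th entry of $\cl A_f(\cl P)$ equals $\sum_{k\neq0}F_ke^{\im k\xi_j}\phi_{\cl P}(k\bs\omega_j)$, while, since $f$ is real-valued (hence $F_{-k}=\overline{F_k}$), the $j$-th entry of $\cl A_{f_1}(\cl Q)$ equals $F_1e^{\im\xi_j}\phi_{\cl Q}(\bs\omega_j)+F_{-1}e^{-\im\xi_j}\phi_{\cl Q}(-\bs\omega_j)=2\Re\!\big(F_1e^{\im\xi_j}\phi_{\cl Q}(\bs\omega_j)\big)$. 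Both entries are real, and their difference $h_j$ splits into a first-harmonic part carrying all the $\cl Q$-dependence and a higher-harmonic part depending only on $\cl P$: $h_j=2\Re(F_1e^{\im\xi_j}w_j)+\sum_{|k|\geq2}F_ke^{\im k\xi_j}\phi_{\cl P}(k\bs\omega_j)$, where $w_j:=\phi_{\cl P}(\bs\omega_j)-\phi_{\cl Q}(\bs\omega_j)$.

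The heart of the proof is the conditional expectation of $h_j^2$ over the dither $\xi_j\sim\cl U([0,2\pi])$ with $\bs\omega_j$ fixed. Since $\E_\xi e^{\im\ell\xi}=0$ for every nonzero integer $\ell$: (i) the cross term between the two parts vanishes in $\xi$-mean, because the frequencies $\pm1+k$ with $|k|\geq2$ are all nonzero; (ii) $\E_{\xi_j}\big[(2\Re(F_1e^{\im\xi_j}w_j))^2\big]=2|F_1|^2|w_j|^2$ (which, writing $F_1w_j=\rho e^{\im\theta}$, is $\E_{\xi_j}[4\rho^2\cos^2(\xi_j+\theta)]=2\rho^2$); and (iii) in the square of the higher-harmonic sum only the conjugate pairs survive, leaving $\sum_{|k|\geq2}|F_k|^2|\phi_{\cl P}(k\bs\omega_j)|^2$. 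Averaging then over $\bs\omega_j\sim\Lambda$ and recalling $\gamma^2_\Lambda(\cl P,\cl Q)=\E_{\bs\omega\sim\Lambda}|\phi_{\cl P}(\bs\omega)-\phi_{\cl Q}(\bs\omega)|^2$ gives $\E[h_j^2]=2|F_1|^2\gamma^2_\Lambda(\cl P,\cl Q)+2|F_1|^2c_{\cl P}$ with $c_{\cl P}:=(2|F_1|^2)^{-1}\,\E_{\bs\omega\sim\Lambda}\sum_{|k|\geq2}|F_k|^2|\phi_{\cl P}(k\bs\omega)|^2\geq0$, a constant depending only on $\cl P$, $f$, $\Lambda$, and strictly positive as soon as $f$ has a harmonic of order $\geq2$ not annihilated $\Lambda$-a.e.\ by $\phi_{\cl P}$ (in particular for $f=\Quant$, all of whose odd harmonics are nonzero). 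Consequently the variables $Y_j:=h_j^2/(2|F_1|^2)$ are i.i.d.\ with $\E Y_j=\gamma^2_\Lambda(\cl P,\cl Q)+c_{\cl P}$ and $(2m|F_1|^2)^{-1}\|\cl A_f(\cl P)-\cl A_{f_1}(\cl Q)\|^2=\tfrac1m\sum_{j=1}^m Y_j$.

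It remains to control the deviation from the mean. Here I would bound $|h_j|\leq\|f\|_\infty+\|f_1\|_\infty\leq1+2|F_1|$ (using $f$ valued in $[-1,1]$ and $|f_1(t)|=|2\Re(F_1e^{\im t})|\leq2|F_1|$), so $Y_j$ lies in an interval of length $L=(1+2|F_1|)^2/(2|F_1|^2)$. Hoeffding's inequality for the average of such i.i.d.\ variables then yields $\Prob\big[\,\big|\tfrac1m\sum_{j=1}^m Y_j-\E Y_j\big|\geq\epsilon\,\big]\leq2\exp(-2m\epsilon^2/L^2)=2\exp\!\big(-8|F_1|^4(1+2|F_1|)^{-4}\,m\epsilon^2\big)$, which is exactly the claimed bound with $C_f=8|F_1|^4(1+2|F_1|)^{-4}$.

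The only genuinely delicate points are (a) justifying the termwise passage of $\E_{\Vec x}$ and $\E_\xi$ through the Fourier series — routine, as $f$ is bounded, $\cl P,\cl Q$ are probability measures, and Parseval controls the $L^2$ tail of higher harmonics — and (b) the bookkeeping in the dither integral that makes every relevant cross frequency nonzero, which is precisely what turns the higher harmonics into the \emph{exact}, $\cl Q$-independent offset $c_{\cl P}$; the concentration step itself is a black-box use of Hoeffding.
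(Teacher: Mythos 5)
Your proposal is correct and follows essentially the same route as the paper's proof: expand the sketch coordinates in the Fourier series of $f$, use the uniform dithering to make all cross-frequency terms vanish in expectation so the mean of each squared coordinate equals $2|F_1|^2\gamma^2_\Lambda(\cl P,\cl Q)$ plus the $\cl Q$-independent higher-harmonic offset defining $c_{\cl P}$, bound each summand by $(1+2|F_1|)^2$, and conclude by Hoeffding, recovering the same constant $C_f$. Your treatment is just a slightly more explicit version (real-valued expansion, conditional dither expectation, and the remark on when $c_{\cl P}$ is strictly positive) of the paper's argument.
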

\begin{proof}
	Note that 
	$\|\mathcal A_f(\mathcal P) - \mathcal{A}_{f_1} (\mathcal{Q})\|^2 = \sum_{j=1}^m Z_j$ with $Z_j := |\expec{\Vec{x} \sim \mathcal{P}} f(\Vec{\omega}_j^T\Vec{x} + \xi_j) - \expec{\Vec{y} \sim \mathcal{Q}} f_1(\Vec{\omega}_j^T\Vec{y} + \xi_j)|^2 \leq  (1+2|F_1|)^2 $. Define ${\tilde{F}_k = F_k (\delta_{k',1} + \delta_{k',-1})}$ with $\delta_{p,q}$ the Kronecker delta. Since $\expec{\xi} e^{\im k' \xi} = \delta_{k',0}$, $\mu_Z := \expec{\Vec{\omega}_j,\xi_j} Z_j$ reads 
	\begin{equation*}
	\begin{split} 
	\mu_Z &= \expec{\Vec{\omega}} \expec{\xi} \left|  \sum_{k \neq 0} e^{\im k \xi} ( F_k   \expec{\Vec{x} \sim \mathcal{P}} e^{\im k \Vec{\omega}^T\Vec{x}}     -  \tilde{F}_k   \expec{\Vec{x}' \sim \mathcal{Q}} e^{\im k \Vec{\omega}^T\Vec{x}'} ) \right|^2 \\[-1mm]
	& = \expec{\Vec{\omega} \sim \Lambda}   \sum_{k \neq 0}   | F_k   \phi_{\cl P}(k \Vec{\omega})     -  \tilde{F}_k   \phi_{\cl Q}(k \Vec{\omega})|^2 \\
	& = 2 | F_1 |^2 \gamma^2_{\Lambda}(\mathcal P, \mathcal Q) +  \expec{\Vec{\omega} }  \sum_{|k| \geq 2} |F_k|^2   |\phi_{\mathcal{P}}(k \Vec{\omega})|^2. 
	\end{split}
	\end{equation*}
	Hoeffding's inequality applied on the bounded $Z_j$s gives~\eqref{eq:accessAllFreq} with $c_{\mathcal{P}} =   \expec{\Vec{\omega} }  \sum_{|k| \geq 2} \frac{|F_k|^2}{2 |F_1|^2}   |\phi_{\mathcal{P}}(k \Vec{\omega})|^2$ constant with respect to~$\mathcal{Q}$. Note that $c_{\cl P} < \infty$ since $f$ is bounded and $F_1 \neq 0$.
\end{proof}
Prop. \ref{prop:main} allows us---at the price of adding a dithering---to sketch a dataset into $\Vec{z}_{X,f} = \cl A_f(\hat{\cl P}_X)$ with a very large class of functions $f$, \eg that model a realistic sensing scheme. Indeed, it shows that, for a fixed pair of pdfs, replacing \texttt{CKM}s objective in~\eqref{eq:sketchmatching} by $\norm{ \mathcal A_f(\mathcal P) - \mathcal{A}_{f_1} (\mathcal{Q}) }^2$ still approximates (up to a harmless constant $c_{\cl P}$) the MMD metric $\gamma_{\Lambda}^2(\cl P,\cl Q)$: an intuitively good cost function as justified by previous work. Moreover, thanks to the fact that $f_1$ is a cosine, the relevant gradients of this new cost function enjoy the same nice analytic expressions as \texttt{CKM}.
	Fixing the probability of \eqref{eq:accessAllFreq}, we also see that (again, for a fixed pair of distributions) the approximation error $\epsilon$ decays like $\cl O(1/\sqrt m)$ as $m$ increases. However, it is yet unclear how large $m$ must be to characterize the quality of the solution of \eqref{eq:QCKMclustering}. This would require Prop. \ref{prop:main} to hold for all $\cl Q$ in a ``low-dimensional'' set $\cl G \ni \cl P$ as done in \cite{gribonval2017compressiveStatisticalLearning}, a generalization that we postpone to a future work.

\section{Quantized Compressive K-Means (QCKM)}
\label{sec:quantized-ckm}

We now instantiate the results of Sec.~\ref{sec:generalizedSketch} to the 1-bit universal quantization $f(t) = \Quant(t) = \sign(\cos(t)) \in \{-1,+1\}$ to construct a hardware-friendly \textit{quantized sketch}. The 1-bit universal quantizer $\Quant$ is a square wave and can be seen as the Least Significant Bit of a uniform quantizer with quantization stepsize $\pi$~\cite{Boufounos2013efficientCodingQuantized,Boufounos2015signalGeometry}. The resulting sketch operator $\cl A_{\Quant}$ on a pdf $\cl P$ (resp. the sketch $\Vec{z}_{X,\Quant}$ of a dataset $X$) is
\begin{equation}
\cl A_{\Quant}(\mathcal P) := \expec{\Vec{x} \sim \mathcal{P}} \Quant( \Omega^T \Vec{x} + \Vec{\xi} ) \quad \simeq \quad \Vec{z}_{X,\Quant} := \cl A_{\Quant}( \hat{\cl P}_X ) = \frac{1}{N} \sum_{\Vec{x}_i \in X} q(\Omega^T\Vec{x}_i + \Vec{\xi}) ,
\end{equation}
and the clustering problem in~\eqref{eq:ckm} is now replaced by
\begin{equation}
(\cl C_{\texttt{QCKM}},\Vec{\alpha}_{\texttt{QCKM}}) \in  \argmin_{(\cl C,\Vec{\alpha})} \| \Vec{z}_{X, \Quant} - \cl A_{\Quant_1} ( \sum_{k} \alpha_k \delta_{\Vec{c}_k} ) \|^2. 
\label{eq:QCKMclustering}
\end{equation}
where $\Quant_1$ denotes the first harmonic of $\Quant$ (a cosine). Interestingly, the contribution $\Vec{z}_{\Vec{x}_i, \Quant} = \cl A_{\Quant}(\delta_{\Vec{x}_i}) \in \{-1,1\}^m$ of each signal $\Vec{x}_i$ can be encoded by only $m$ bits, as illustrated Fig.~\ref{fig:cloud}. 

To solve~\eqref{eq:QCKMclustering}, we adapt the \texttt{CKM} algorithm to account for the changes in objective function, which we call the Quantized Compressive K-Means (\texttt{QCKM}) algorithm. More precisely, $\Vec{z}_{X}$ is replaced by $\Vec{z}_{X,\Quant}$ at initialization and in Steps 3, 4 and 5, and $\cl A \delta_{\Vec{c}}$ is replaced by $\cl A_{\Quant_1} \delta_{\Vec{c}}$ in Steps 1, 3, 4 and 5. Small modifications also take into account the addition of $\Vec{\xi}$. 

\section{Experiments to validate QCKM}
\label{sec:experiments}

We show now empirically that \texttt{QCKM} requires only $m =\cl O(nK)$ measurements to find good centroids, with a hidden multiplicative constant only slightly higher ($15$ to $25\%$) than for \texttt{CKM}---remembering that \texttt{QCKM} receives $m$-bit sketch contributions whereas \texttt{CKM} uses full-precision contributions. We validate \texttt{QCKM} on both synthetic and real datasets and compare the performance with \texttt{k-means} (built-in MATLAB function) as well as \texttt{CKM} (from the SketchMLbox toolbox~\cite{sketchMLtoolbox}). 

\paragraph{Synthetic data:} We compute \emph{phase transition diagrams} (Fig.~\ref{fig:phaseN}) to highlight the relationship between the required amount of measurements $m$, and the sample dimension $n$ or the number of clusters $K$. For this, we arbitrary say that \texttt{(Q)CKM} is \textit{successful} if $\mathrm{SSE}_{\texttt{(Q)CKM}} \leq 1.2 \hspace{1.2px} \mathrm{SSE}_{\texttt{k-means}}$, where $\mathrm{SSE}_{\texttt{k-means}}$ is the best out of 5 \texttt{k-means} runs. These diagrams show how the empirical success rate (averaged over $100$ trials) of \texttt{QCKM} evolves with $m$, as $n$ or $K$ varies. For fair comparison with the complex exponential sketch (composed of a cosine and sine in its real and imaginary part, respectively), the $j^{\rm th}$ measurement of the quantized sketch is, in our experiment, composed of two measurements with the same frequency $\Vec{\omega}_j$ but two dithering values $\xi_j$ and $\xi_j + \frac{\pi}{2}$.
First, we draw $N=10000$ samples uniformly from $K = 2$ isotropic Gaussians in \textit{varying dimension} $n$, with means $\pm (1,\cdots,1)^\top \in \bb R^n$ and covariance matrix $\frac{n}{20} {\rm Id}$. The phase transition diagram is reported Fig.~\ref{fig:phaseN}a, along with lines showing the transition to more than $50\%$ success rate of \texttt{QCKM} (red solid) and, for comparison, of \texttt{CKM} (yellow dotted). This transition happens (except for a deviation at small dimensions) at a constant value of $m/nK$: as \texttt{CKM}, \texttt{QCKM} requires $m$ to be proportional to $n$. In this experiment, \texttt{QCKM} requires about 1.13 more measurements than \texttt{CKM}s (complex and full precision) measurements. Fig.~\ref{fig:phaseK}b is the phase transition for \textit{varying numbers of centroids} $K$ while fixing $n = 5$. Samples are drawn from $K$ Gaussians with means chosen randomly in $\{ \pm 1\}^n$, other parameters being identical to the previous experiment. Successful estimation occurs when $m$ scales linearly with $K$, with a factor of about $1.23$ between \texttt{QCKM} and \texttt{CKM} sample complexities.
These experiments suggest that \texttt{CKM}s empirical rule $m = \cl O(nK)$ holds for \texttt{QCKM}, with a slightly higher multiplicative constant.

\begin{figure}
	\centering
	\includegraphics[width=0.70\linewidth]{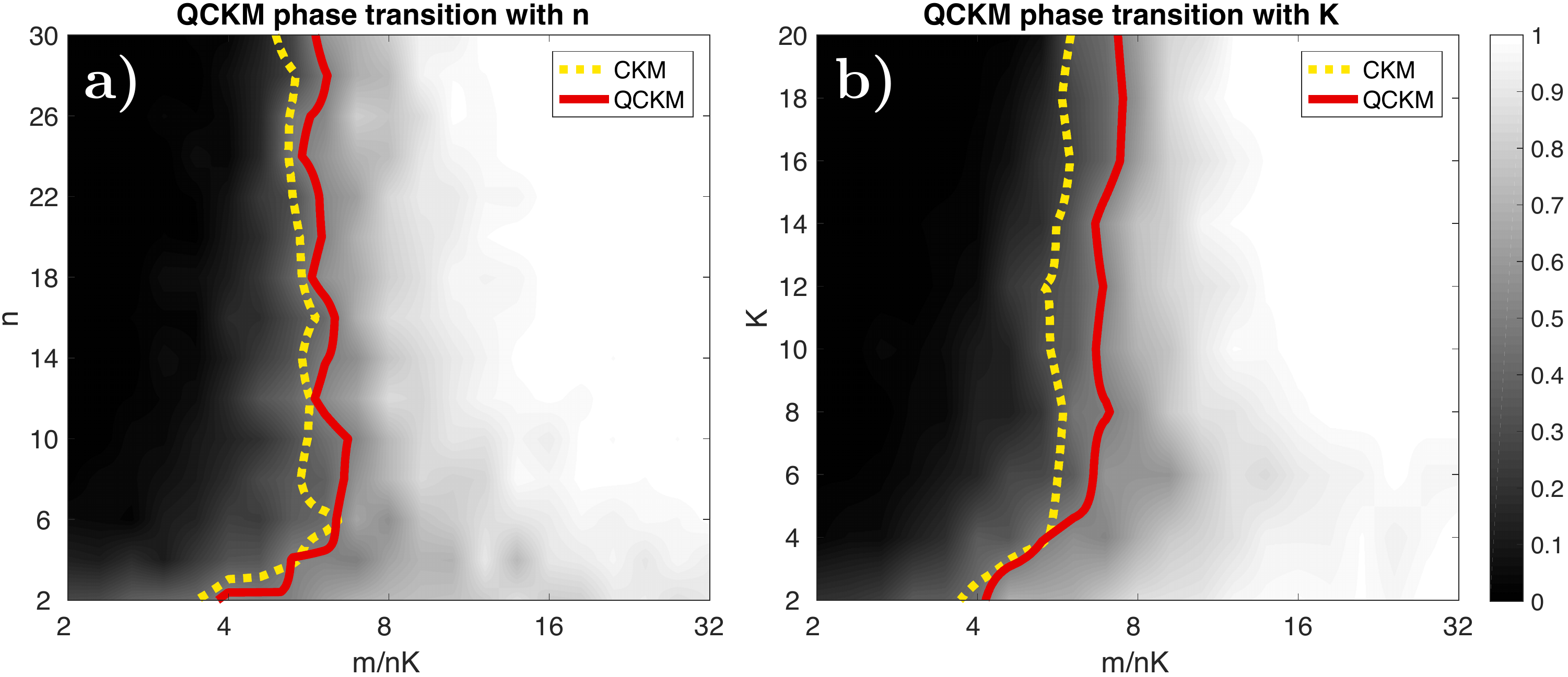}
	\caption{Empirical success rate---from $0\%$ (black) to $100\%$ (white)---evolution of \texttt{QCKM} with $m/nK$, and \textbf{(a)} $n$, or \textbf{(b)} $K$. The red solid line shows the transition to a success rate above $50\%$ for \texttt{QCKM}; for comparison, the yellow dotted line shows the same transition for \texttt{CKM}.}
	\label{fig:phaseN}
	\label{fig:phaseK}
\end{figure}

\paragraph{Real datasets:} The performance of \texttt{QCKM} are also assessed on real (and non-Gaussian) data: the spectral clustering (SC) \cite{vonLuxburg2007tutorialSpectralScluster} of the MNIST dataset (70000 $28\times28$ pixel images of handwritten digits~\cite{MNISTdataset}). This experiment aims at detecting, in an unsupervised setting, the 10 clusters corresponding to the digits $0-9$ from their representation in a 10-dimensional feature space\footnote{We thank the authors of \cite{keriven2016compressive} for having shared this SC dataset.}. We run the compressive clustering algorithms with $m = 1000$ frequencies. To avoid bad local minima, several replicates of \texttt{k-means} are usually run and the solution that achieves the best SSE is then selected. We thus also perform several replicates of \texttt{(Q)CKM}, but since computing the SSE requires access to whole dataset (which is not supposed available to the compressive algorithms), we select the solution of \texttt{CKM} (resp. \texttt{QCKM}) minimizing~\eqref{eq:ckm} (resp.~\eqref{eq:QCKMclustering}) \cite{keriven2016compressive}. 

We use two performance metrics to assess the clustering algorithms: the SSE in~\eqref{eq:clustering}, an obvious KMC quality measure, and the Adjusted Rand Index (ARI) \cite{vinh2010informationARI} that compares the clusters produced by the different algorithms with the ground truth digits. A higher ARI means the clusters are closer to the ground truth, with ARI $= 1$ if the partitions are identical and ARI $= 0$ (on average) if the clusters are assigned at random.

\begin{figure}
	\centering
	\includegraphics[width=0.70\linewidth]{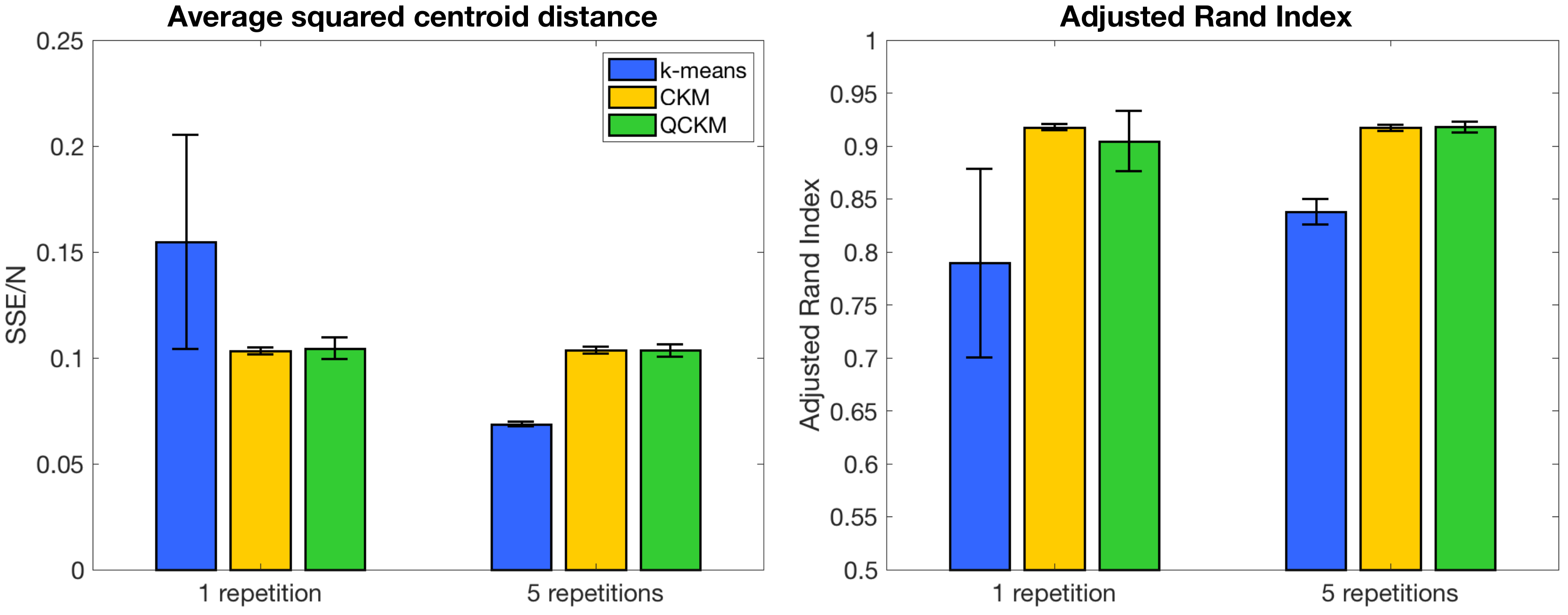}
	\caption{Mean with standard deviation over 100 experiments of the performance ($SSE/N$ and ARI on left and right, respectively) of the different compared clustering algorithms (\texttt{k-means} in blue, \texttt{CKM} in yellow, \texttt{QCKM} in green), both for 1 and 5 repetitions of the learning algorithms.}
	\label{fig:MNIST}
\end{figure}

Fig.~\ref{fig:MNIST} reports the mean and standard deviation (excluding a few clear outliers for \texttt{CKM} and \texttt{QCKM}, occuring about $5\%$ of the time on average) obtained for both performance metrics. Globally, \texttt{QCKM} performs similarly to \texttt{CKM}, retaining its advantages over \texttt{k-means}. First, the compressive learning algorithms are more stable: their performance exhibit small variance, in contrast with \texttt{k-means} that hence benefits the most from several replicates. In addition, while for several replicates \texttt{k-means} outperforms \texttt{(Q)CKM} in terms of SSE, the solutions of \texttt{(Q)CKM} are closer (as quantified by the ARI score) to the ground truth labels: this suggests that the objectives~\eqref{eq:ckm} or~\eqref{eq:QCKMclustering} are better suited than the SSE for (at least) this task. Note that \texttt{QCKM} performances have moderately higher variance than those of \texttt{CKM}: this is probably due to the
increased measurement rate of \texttt{QCKM} required to reach similar performance (as suggested by the first experiments) while here both algorithms ran with $m = 1000$. 

\section{Conclusion}
\label{sec:conclusion}

In the context of compressive learning, we have shown that replacing the complex exponential by any periodic function $f$ in the sketch procedure can be compensated by \textit{i)} adding a dithering term to its input, and \textit{ii)} retaining only the first harmonic of $f$ at reconstruction. However, Prop.~\ref{prop:main} is valid for \textit{fixed} distributions: future work should provide guarantees for \textit{all} distributions, \eg belonging to some low-complexity set $\cl G$. Still, we believe this result can simplify the design of \textit{low-power sensors acquiring only the minimal information required for some learning task} (\eg the sketch contribution). To illustrate this idea, we proposed \texttt{QCKM}, a compressive clustering method based on \texttt{CKM} \cite{keriven2016compressive} but using hardware-friendly, 1-bit sketches of the learning data, and validated this approach through experiments. However, while yielding promising results in practice, the greedy algorithms that try to solve the non-convex sketch matching optimization problem (\eg \texttt{CKM} and \texttt{QCKM}) still lack theoretical convergence guarantees. Future work could also consider the binarization of new learning tasks (\eg Compressive PCA \cite{gribonval2017compressiveStatisticalLearning}), or explore other sketching mechanisms.

\end{document}